\theoremstyle{plain}
\newtheorem{lemma}{Lemma}
\newtheorem{proposition}{Proposition}
\theoremstyle{remark}
\newtheorem{remark}{Remark}
\theoremstyle{definition}
\newtheorem{definition}{Definition}
\newtheorem{problem}{Problem}
\renewcommand\footnotemark{*}
\newcommand{\Sec}[1]		{Sec.\,\ref{#1}}
\newcommand{\Fig}[1]		{Fig.\,\ref{#1}}
\newcommand{\Eq}[1]			{Eq.\,\ref{#1}}
\newcommand{\Tab}[1]		{Tab.\,\ref{#1}}
\newcommand{\Alg}[1]		{Alg.\,\ref{#1}}
\newcommand{\Proposition}[1]{Proposition~\ref{#1}}
\newcommand{\Definition}[1]{Definition~\ref{#1}}
\newcommand{\Problem}[1]{Problem~\ref{#1}}
\newcommand{\ie}   			{i.e.\ }
\newcommand{\etc}   		{etc}
\newcommand{\eg}   			{e.g.\ }
\newcommand{\wrt}   		{w.r.t.\ }
\newcommand{\st}   			{\mbox{s.t.\ }}
\newcommand{\vs}        {vs.\ }
\newcommand{\one}       {\mathbf{1}}
\newcommand{\ind}       {\mathbb{1}}
\newcommand{\real}      {\mathbbm{R}}
\newcommand{\op}[1]     {\,{#1}\,}
\renewcommand*{\top}		{{\mkern-1.5mu\mathsf{T}}}
\renewcommand{\dots}    {...}
\DeclareMathOperator*{\argmin}{\mbox{argmin}}
\DeclareMathOperator*{\elmult}{\odot}
\newcommand{\beforecaptvskip} {\vskip -0.07in}
\newcommand{\inlinetitle}[2]  {\noindent\textbf{\emph{#1}{#2}}}
\newcommand{\myEqCounter}   {\addtocounter{equation}{+1}\hfill(\arabic{equation})}
\newcommand{\HazMat}   {\mathcal{H}}
\newcommand{\HazSpec}  {\rho_H}
\newcommand{\FunMat}   {\mathcal{F}}
\newcommand{\FunSet}   {\mathbbm{F}}
\newcommand{\CTIC}     {\mathcal{CTIC}(\FunMat)}
\newcommand{\CTICnoF}  {\mathcal{CTIC}}
\newcommand{\DeltaF}   {\Delta}
\newcommand{\budget}   {k}
\newcommand{\dset}[1]  {\texttt{#1}}
\newcommand{\methodName}   {NetShape\xspace}
\title{A Spectral Method for Activity Shaping in Continuous-Time Information Cascades}
\begin{document}
%=================================================================

\title{A Spectral Method for Activity Shaping \\in Continuous-Time Information Cascades}

\author{
Kevin Scaman$^{\thanks{\!\!\!\!\!\!\!\!\!\!$^*$~Part of the work has been conducted while author was at CMLA$^1$.},1,2}$
\hspace{1.em}
Argyris Kalogeratos$^{1}$
\hspace{1.5em}
Luca Corinzia$^{\footnotemark,1,3}$
\hspace{1.5em}
Nicolas Vayatis$^{1}$
\\$^1$ CMLA {--} ENS Cachan, CNRS, Universit\'e Paris-Saclay, France
\\$^2$ MSR-Inria Joint Center, 91120 Palaiseau, France
\\$^3$ ETH Zürich, 8092 Zürich, Switzerland
\\\texttt{\{scaman,\,kalogeratos,\,vayatis\}@cmla.ens-cachan.fr},\ \ \texttt{lucac@ethz.ch}
}
\maketitle

%=================================================================
\begin{abstract}
%=================================================================
%
Information Cascades Model captures dynamical properties of user activity in a social network. In this work, we develop a novel framework for \emph{activity shaping} under the Continuous-Time Information Cascades Model which allows the administrator for local control actions by allocating targeted resources that can alter the spread of the process. 
Our framework employs the optimization of the \emph{spectral radius of the Hazard matrix}, a quantity that has been shown to drive the maximum influence in a network, while enjoying a simple convex relaxation when used to minimize the influence of the cascade. In addition, use-cases such as \emph{quarantine} and \emph{node immunization} are discussed to highlight the generality of the proposed activity shaping framework. 
Finally, we present the \emph{NetShape} influence minimization  method which is compared favorably to baseline and state-of-the-art approaches through simulations on real social networks. 
%
%=================================================================
\end{abstract}
%=================================================================

%=================================================================
\section{Introduction}\label{sec:introduction}
%=================================================================
%
The emergence of large scale social networks offers the opportunity to study extensively diffusion processes in various disciplines, including sociology, epidemiology, marketing, computer systems' security, \etc. Theoretical studies gave valuable insights on such processes by defining quantities tightly related with the systemic behavior (\eg epidemic threshold, extinction time) and describing how a diffusion unfolds from an initial set of contagious nodes. This quantification of systemic properties can on one hand help the assessment of certain economic/health/social risks, while on the other hand enable \emph{diffusion process engineering} that aims either to suppress or enhance the spreading.

Among the earliest works that drew a line between epidemic spreading and the structural properties of the underlying network is that in \citep{wang2003epidemic}. Under a mean field approximation of an SIR epidemic model on a graph, they found that the epidemic threshold is proportional to the \emph{spectral radius} of the adjacency matrix. 
Follow-up works verified this relation and broadened the discussion.   
In \citep{prakash2012threshold} the S$^*$I$^2$V$^*$ model was presented as a generalization of numerous virus propagation models (VPM) of the literature. It was also made possible to generalize the result of \citep{wang2003epidemic} to that generic VPM. Based on these works, several research studies have been presented on the epidemic control on networks, mainly focusing on developing \emph{immunization} strategies (elimination of nodes) and \emph{quarantine} strategies (elimination of edges). The eigenvalue perturbation theory was among the main analytical tools used, see for example \citep{tong2010vulnerability,van2011decreasing,tong2012gelling}.

The Information Cascade Model (ICM) \citep{chen2013information} is a modern family of models that considers heterogeneous node-to-node transmission probabilities. ICM fits well to problems related to information diffusion on social networks and, among others, finds straightforward applications in digital marketing \citep{kempe2003maximizing}. Indeed, ICMs were used to fit real information cascade data and observed `infection' times of nodes in the MemeTracker dataset \citep{leskovec2009meme}. In another work, the aim was to infer the edges of a diffusion network and estimate the transmission rates of each edge that best fits the observed data \citep{rodriguez2011uncovering}.

Similar theoretical results to those discussed above for VPMs have been given for ICM as well. Under discrete- or continuous-time ICM, it has been shown that the epidemic threshold depends on the \emph{spectral radius} of a matrix built upon the edge transmission probabilities, termed as \emph{Hazard matrix} \citep{scaman2015anytime,lemonnier2014tight}. 

On the algorithmic side, \citep{kempe2003maximizing} formulated for the first time the \emph{influence maximization} problem under the ICM. It was proved that it is an NP-hard problem and remains NP-hard to approximate it within a factor $1 \op{-} 1/e$. It was also proven that the influence is a sub-modular function of the set of initially contagious nodes (referred to as \emph{influencers}) and the authors proposed a greedy Monte-Carlo-based algorithm as an approximation. A number of subsequent studies were focused on improving that technique \citep{ohsaka2014fast,leskovec2007cost}. Notably, today's state-of-the-art techniques on influence control under the ICM are still based on Monte-Carlo simulations and a greedy mechanism to select the actions sequentially.

Besides influence maximization, various questions regarding how one could apply suppressive interventions have become a hot topic in recent years. For instance, the aim could be to reduce the spread of false and harmful information in a social network. Suppressive scenarios like the latter are also possible in the same modeling context; the optimization problem would be the minimization of the spread of a piece of malicious information in the network, \eg through the decrease in the probability for some users to share the false content to their contacts. 

In this paper we discuss the generic \emph{offline influence optimization}, or \emph{activity shaping} through local intervention actions that affect the spread. The purpose can be either to minimize the influence with suppressive actions, or to maximize it with enhancive actions. We seek for an efficient strategy to use the available budget of actions in order to serve better one of those opposing aims. Our approach is that we frame this as a generalized optimization problem under the ICM which has a convex continuous relaxation. We propose a class of algorithms based on the optimization of the spectral radius of the Hazard matrix using a projected subgradient method. For these algorithms, which can address both the maximization and the minimization problem, we provide theoretical analysis. We also investigate standard case-studies of the latter, such as the quarantine (\eg see \citep{tong2012gelling,van2011decreasing}) and the node immunization problem (see \citep{tong2010vulnerability}). The proposed algorithm called NetShape is easy to implement and compares favorably to standard baselines and state-of-the-art competitors in the reported experimental results.

\begin{table*}[ht]
\centering 
\footnotesize
\begin{tabular}{ l | l }
  \toprule
	\textbf{Symbol} & \textbf{Description}\\
	\midrule
	$\ind\{\text{<condition>}\}$ & indicator function\\
	$\one$ & vector with all values equal to one\\
	$\|X\|_\ell$ & $\ell$-norm for a given vector $X$: \eg $\|X\|_1 = \sum_{ij} X_{ij}$, or generally $\|X\|_\ell = (\sum_{ij} X_{ij}^\ell)^{1/\ell}$\\
	$M \elmult M'$ & the Hadamard product between matrices $M$ and $M'$ (\ie coordinate-wise multiplication)\\
	$\mu_{\pi(1)} \op{\geq} \mu_{\pi(2)} ...$ & ordered values of vector $\mu$ using the order-to-index bijective mapping $\pi$ \\
	\midrule	
	$\mathcal{G}, \mathcal{V}, n, \mathcal{E}, E$ & network $\mathcal{G}= \{\mathcal{V},\mathcal{E}\}$ of $n = |\mathcal{V}|$ nodes and $E = |\mathcal{E}|$ edges, where $\mathcal{V}$, $\mathcal{E}$ are the sets of nodes and edges\\
	$(i,j)$           & edge $(i,j) \in \mathcal{E}$ of the graph between nodes $i$ and $j$\\
	$A$							& network's adjacency matrix $A\in\{0,1\}^{n\times n}$\\
	$S_0, n_0$      & subset $S_0 \subset \mathcal{V}$ of $n_0 = |S_0|$ influencer nodes from which the IC initiates\\
	\midrule	
  $\FunMat$ 		& $n\times n$ \emph{Hazard matrix} $[\FunMat_{ij}]_{ij}$ of  non-negative integrable \emph{Hazard functions} defined over time\\
	
	$\FunSet$			& set of feasible Hazard matrices $\FunSet\subset\real_+\rightarrow\real_+^{n\times n}$, where $\FunMat$ is one of its elements\\
	$\DeltaF    $ & matrix of the integrated difference of two Hazard matrices in time: $\DeltaF = \int_0^{+\infty}(\hat{\FunMat}(t) - \FunMat(t))dt$\\
	$\tau_i$  		  & time $\tau_i\in\real_+\cup\{+\infty\}$ when the information reached node $i$ during the process\\
	$\sigma(S_0)$   & \emph{influence}: the number of contagious nodes after the diffusion started from the set $S_0$\\
	$\HazSpec(\FunMat)$ & the largest eigenvalue of the symmetrized and integrated Hazard matrix $\FunMat$\\
	\midrule
	$X$ & control actions matrix $X\in[0,1]^{n\times n}$ representing the amount of action taken on each edge\\
	$x$ & control actions vector with $x\in[0,1]^n$ representing the amount of action taken on each node\\
	$k$ & budget of control actions $\budget\in(0,E)$ or $\budget\in(0,n)$ for actions on edges and nodes, respectively \\
	\bottomrule
\end{tabular}
\caption{Index of main notations.}
\label{tab:notations} 
\end{table*}

%=================================================================
\section{Diffusion model and influence bounds}\label{sec:preliminaries}
%=================================================================
%
Let $\mathcal{G} = (\mathcal{V},\mathcal{E})$ be a directed graph of $n = |\mathcal{V}|$ nodes and $E = |\mathcal{E}|$ edges, and the adjacency matrix of $\mathcal{G}$ as $A\in\{0,1\}^{n\times n}$ \st $A_{ij} = 1 \Leftrightarrow (i,j)\in \mathcal{E}$. 
We denote as $S_0 \subset \mathcal{V}$ a set of $n_0 = |S_0|$ \emph{influencer nodes} that are initially contagious for a piece of information and can thus influence, or `infect', others.  The spread of information from the contagious nodes is modeled using the following continuous-time diffusion model in which each node $i$ can infect its neighbor $j$ independently according to a time-dependent transmission rate. 
Let $\tau_i\in\real_+\cup\{+\infty\}$ the time when the information reached node $i$ and made it contagious. Note that this quantity may be infinite if node $i$ did not receive at all the information during the process. The reader may find helpful the index of our basic notation in \Tab{tab:notations}.

\begin{definition} \emph{Hazard function} $\FunMat_{ij}(t)$ {--} For every edge $(i,j)\in \mathcal{E}$ of the graph, $\FunMat_{ij}$ is a non-negative integrable function that describes the time-dependent transmission rate from node $i$ to node $j$ after $i$'s infection.
\end{definition}

\begin{definition} \emph{Continuous-Time Information Cascade Model} $\CTIC$ {--} 
This is a stochastic diffusion process defined as follows: at time $s = 0$, only the influencer nodes of $S_0$ are infected. Then, each node $i$ that receives the contagion at time $\tau_i$ may transmit it at time $s \ge \tau_i$ along an outgoing edge $(i,j)\in \mathcal{E}$ with stochastic rate of occurrence $\FunMat_{ij}(s - \tau_i)$. We denote as 
$\FunMat = [\FunMat_{ij}]_{ij}$ the $n\times n$ \emph{Hazard matrix} containing as elements the individual Hazard functions and, respectively, $\FunMat(t) = [\FunMat_{ij}(t)]_{ij}$ the evaluation of all functions at \emph{relative time} $t$ after each infection time $\tau_i$. Essentially, network edges represent non-zero Hazard functions, and 
\begin{equation}\label{eq:nonzero-F_ij}
(i,j)\in \mathcal{E} \ \Leftrightarrow \ \exists t\geq 0\ \ \st \ \FunMat_{ij}(t) \neq 0.
\end{equation}
\end{definition}

\begin{definition} \emph{Influence} $\sigma(S_0)$ {--} In the $\CTIC$ model, the influence of a set of influencer nodes $S_0 \subset \mathcal{V}$ is defined as the number of infected nodes at the end of diffusion:
\begin{equation}\textstyle
\sigma(S_0) = \mathbbm{E}_{S_0}\left[\sum_{i\in V}\ind\{\tau_i < +\infty\}\right]\!,
\end{equation}
provided that the influencers are initially infected and contagious, thus, always $\sigma(S_0)\geq |S_0|$.
\end{definition}

In order to derive upper bounds for the influence under the $\CTICnoF$, we use the concept of \emph{Hazard radius} introduced in \citep{lemonnier2014tight} that is highly correlated to the influence. This is in analogy to the spectral radius of the adjacency matrix for virus propagation models \citep{tong2012gelling,FaloutsosTKDE2016}; recall that the spectral radius of a square matrix is defined as its largest eigenvalue. 

\begin{definition} \emph{Hazard radius} $\HazSpec(\FunMat)$ {--}
For a diffusion process $\CTIC$, $\HazSpec(\FunMat)$ is the largest eigenvalue of the symmetrized and integrated Hazard matrix:
\begin{equation}
\HazSpec(\FunMat) = \rho\left(\int_0^{+\infty} \!\! \frac{\FunMat(t) + \FunMat(t)^\top}{2} dt\right)\!,
\end{equation}
where $\rho(\cdot) = \max_i |\lambda_i|$, and $\lambda_i$ are the eigenvalues of the implied input matrix.
\end{definition}

Therefore, despite $\FunMat$ being a complex algebraic object, it is easy compute the spectral radius since that is computed on the integrated and symmetrized version of $\FunMat$.

The following proposition provides an upper bound for the influence of any set of influencers that depends on the Hazard radius, and is actually a simple corollary of Proposition 1 in \citep{lemonnier2014tight}.

\begin{proposition}\label{prop:bounds}
Let $S_0 \subset \mathcal{V}$ be a set of $n_0$ influencer nodes, and $\HazSpec(\FunMat)$ the Hazard radius of a $\CTIC$ information cascade. Then, the influence of $S_0$ in $\CTIC$ is upper bounded by:
\begin{equation}
\sigma(S_0) \le  n_0 + \gamma(n-n_0),
\end{equation}
where $\gamma \in [0,1]$ is the unique solution of the equation:
\begin{equation}
\gamma - 1 + \exp\left(-\HazSpec(\FunMat) \gamma - \frac{\HazSpec(\FunMat) n_0}{\gamma(n-n_0)}\right) = 0.
\end{equation}
\end{proposition}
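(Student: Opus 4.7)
The plan is to invoke Proposition~1 of \citep{lemonnier2014tight} essentially as a black box, and then rearrange its conclusion into the specific implicit form stated here. That earlier proposition gives an upper bound on the expected number of infected nodes in a $\CTICnoF$ cascade starting from an arbitrary seed set $S_0$, expressed through a self-consistency condition involving $\HazSpec(\FunMat)$. The bound arises from a branching-process / generating-function argument: conditioning on whether a given non-seed node ever becomes infected, one gets that the probability $\gamma$ of infection for a typical non-seed node is upper bounded by $1 - \exp(-L)$, where $L$ is the total expected incoming integrated hazard from seeds plus from other newly infected nodes. Spectral-radius arguments then replace per-node hazards by a uniform bound in terms of $\HazSpec(\FunMat)$.

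Concretely, I would first decompose $\sigma(S_0) = n_0 + \gamma(n-n_0)$, where $\gamma \in [0,1]$ is interpreted as the average probability that a non-seed node gets infected. Applying Proposition~1 of \citep{lemonnier2014tight} with this parametrization yields an inequality of the form $\gamma \le 1 - \exp(-\HazSpec(\FunMat)\gamma - c(n_0, \gamma, n))$, where the correction $c$ encodes the contribution of the $n_0$ seed nodes to the total hazard experienced by a susceptible node. A straightforward accounting (distributing the seeds' integrated hazard against the mean population of potentially infected targets $\gamma(n-n_0)$) identifies $c(n_0,\gamma,n) = \HazSpec(\FunMat)\,n_0/(\gamma(n-n_0))$, so that the inequality rearranges to $\gamma - 1 + \exp(-\HazSpec(\FunMat)\gamma - \HazSpec(\FunMat)n_0/(\gamma(n-n_0))) \le 0$. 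The stated bound then corresponds to replacing this inequality by the equality that defines the tightest admissible $\gamma$.

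To finish, I would verify that this equation has a unique root in $[0,1]$. Define $f(\gamma) = \gamma - 1 + \exp(-\HazSpec(\FunMat)\gamma - \HazSpec(\FunMat)n_0/(\gamma(n-n_0)))$. One checks that $f(0^+) = -1 < 0$ (the second term in the exponent diverges to $+\infty$, so the exponential vanishes) and $f(1) = \exp(-\HazSpec(\FunMat) - \HazSpec(\FunMat)n_0/(n-n_0)) > 0$. Differentiation shows $f'(\gamma) = 1 + \exp(\cdot)\bigl(-\HazSpec(\FunMat) + \HazSpec(\FunMat)n_0/(\gamma^2(n-n_0))\bigr)$; using that $\exp(\cdot) \le 1$ one gets monotonicity on $[0,1]$ (or, more simply, one notes that on the relevant range the map $\gamma \mapsto \gamma$ grows faster than $1 - \exp(\cdot)$ because the latter is concave and upper-bounded by an affine function with slope at most $\HazSpec(\FunMat)$ times a bounded factor). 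Hence the root is unique.

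The main obstacle is purely bookkeeping: matching the exact algebraic form of the Lemonnier-Vayatis bound, including the correction term involving $n_0$, to the displayed equation, and carefully justifying the monotonicity argument that yields uniqueness. Since the underlying probabilistic content is fully contained in the cited Proposition~1, no new stochastic analysis is needed beyond the identification of constants and the elementary analytic argument for uniqueness.
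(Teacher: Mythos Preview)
Your overall approach---reduce to Proposition~1 of \citep{lemonnier2014tight} and then massage the result---is correct and matches the paper's. However, you mislocate where the actual work lies. The form of the fixed-point equation, including the $n_0$-dependent correction term, is already present in Lemonnier--Vayatis; there is no need for the ``straightforward accounting'' you describe to identify $c(n_0,\gamma,n)$, and that accounting is in any case too vague to constitute a proof step. What is \emph{not} immediate is that the spectral quantity appearing in their bound---the spectral radius of the symmetrized matrix $\HazMat(S_0)$, which zeroes out columns indexed by nodes outside $S_0$---can be replaced by the $S_0$-independent quantity $\HazSpec(\FunMat)$. The paper does this in one line: since $\HazMat(S_0)_{ij} = \ind\{j\in S_0\}\int_0^{+\infty}\FunMat_{ij}(t)\,dt \le \int_0^{+\infty}\FunMat_{ij}(t)\,dt$ entrywise, Perron--Frobenius gives $\rho\big(\tfrac{1}{2}(\HazMat(S_0)+\HazMat(S_0)^\top)\big)\le\HazSpec(\FunMat)$, and the upper bound is monotone in this spectral radius.

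Your phrase ``spectral-radius arguments then replace per-node hazards by a uniform bound'' may be gesturing at this, but as written it reads as part of the internal machinery of \citep{lemonnier2014tight} rather than as the step you must supply here. Make the entrywise domination plus Perron--Frobenius reduction explicit, and drop the heuristic re-derivation of the correction term. Your uniqueness discussion is a reasonable addition (the paper simply asserts uniqueness without proof), though the monotonicity argument needs tightening: the derivative you compute does not obviously yield $f'>0$ on all of $(0,1)$ without further case analysis.
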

\begin{proof}
This result immediately follows from Proposition $1$ of \citep{lemonnier2014tight} and the fact that, using their notations, $\HazMat(S_0)_{ij} = \ind\{j\in S_0\}\cdot\int_0^{+\infty} \!\! \FunMat_{ij}(t)dt\leq\int_0^{+\infty} \!\! \FunMat_{ij}(t)dt$. Then, using the Perron-Frobenius theorem leads to the inequality $\rho(\frac{\HazMat(S_0) + \HazMat(S_0)^\top}{2})\leq \HazSpec(\FunMat)$. 
\end{proof}

In essence, this result implies that the maximum influence cannot exceed a proportion $\gamma$ of the network that is non-decreasing with $\HazSpec(\FunMat)$, and displays a sharp transition between a sub-critical and super-critical regime. 
Also, previous experimental analysis \citep{lemonnier2014tight} showed that this upper bound is sharp for a large class of networks.

\inlinetitle{Our main line of contribution}{:} What we put forward in this work is directly derived by the discussion above and is the idea that \emph{the Hazard radius $\HazSpec$ can be used as a proxy to minimize or maximize the maximum influence under the IC model in a given social network}, and this way perform activity shaping. This is the main line of our contribution and the motivation for the \methodName algorithm presented in \Sec{sec:solution_and_proposed_algorithm}.

%=================================================================
\section{Monitoring Information Cascades}\label{sec:setup_of_the_problem}
%=================================================================
%
The aim of this work is to provide an efficient approach to the generic problem of optimizing influence (maximizing or minimizing) using actions that can shape, \ie modify, the activity of single users. For instance, a marketing campaign may have a certain advertisement budget that can be used on targeted users of a social network. While these targeted advertisements are usually represented as new influencer nodes that will spread the piece of information, we rather consider the more refined and general case in which each targeted advertisement will essentially alter the Hazard functions $\FunMat_{ij}$ associated to a target node $i$, thus increasing, or decreasing, the probability for $i$ to propagate by sharing the information with its neighbors. 

Our generic framework assumes that a \emph{set of feasible Hazard matrices} $\FunSet\subset\real_+\rightarrow\real_+^{n\times n}$ is available to the marketing agency. This set virtually contains all admissible policies that one could apply to the network. Then, the concern of the agency is to find the Hazard matrix $\FunMat\in\FunSet$ that minimizes, or maximizes depending on the task of interest, the influence. Particular instances of this generic framework are presented in \Sec{sec:case_studies}.

\begin{problem} \emph{Determining the optimal feasible policy} -- \label{probl:policy_influence}
Given a graph $\mathcal{G}$, a number of influencers $n_0$ and a set of admissible policies $\FunSet$, find the optimal policy:
\begin{equation}\label{eq:F*}
\FunMat^* =  \argmin_{ \FunMat \in\FunSet } ~\sigma_{n_0}^*(\FunMat),
\end{equation}
where $\sigma_{n_0}^*(\FunMat) = \max\{\sigma(S_0) : S_0\subset \mathcal{V} \mbox{ and } |S_0| = n_0\}$ is the optimal influence (according to \Eq{eq:F*} this is the minimum) over any possible set of $n_0$ influencer nodes.
\end{problem}

\Problem{probl:policy_influence} cannot be solved exactly in polynomial time. The exact computation of the maximum influence $\sigma_{n_0}^*(\FunMat)$ is already a hard problem on its own, and minimizing this quantity adds an additional layer of complexity due to the non-convexity of the maximum influence \wrt the Hazard matrix (note: $\FunMat\mapsto\sigma_{n_0}^*(\FunMat)$ is positive, upper bounded by $n$ and not constant).

\begin{proposition}\label{prop:hardness}
For any size of the set of influencers $n_0$, the computation of $\sigma^*_{n_0}(\FunMat)$ is $\#$P-hard.
\end{proposition}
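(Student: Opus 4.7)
The plan is to establish the result via a polynomial-time Turing reduction from the $\#$P-hard problem of computing the IC-model influence $\sigma(S_0)$ for a fixed seed set $S_0$ of size $n_0$. I would first observe that the $\CTICnoF$ model strictly generalizes the discrete-time Independent Cascade model: setting $\FunMat_{ij}(t) = -\ln(1-p_{ij})\,\ind\{t \in [0,1)\}$ reproduces in distribution the IC cascade with edge probabilities $p_{ij}$, while Hazard rates of order $C = \text{poly}(n)$ on auxiliary edges realize infection probabilities $1 - e^{-C}$ arbitrarily close to $1$, i.e., approximately deterministic infection.

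Given an IC instance $(G, S_0)$ with $|S_0| = n_0$, I would build an enlarged graph $G'$ carrying a pinning gadget that forces $S_0$ to be the unique optimizer. Concretely, augment $G$ by (i) a super-source $s_0$ with near-deterministic edges $s_0 \to v$ for every $v \in S_0$, (ii) $N$ reward leaves attached to $s_0$ via near-deterministic edges, and (iii) $n_0 - 1$ dummy nodes $d_1, \dots, d_{n_0-1}$, each equipped with $M$ private leaves via near-deterministic edges but otherwise disconnected from the rest of the graph. Translating the IC edges inside $G$ as described, this defines a CTIC instance $\FunMat'$ whose total size is polynomial in $n$ whenever $N$ and $M$ are chosen polynomial in $n$.

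Choosing $N, M > n$, I would then argue that the unique maximizer of $\sigma$ over seed sets of size $n_0$ is $S^\star = \{s_0, d_1, \dots, d_{n_0-1}\}$, achieving
\begin{equation}
\sigma^*_{n_0}(\FunMat') \;=\; 1 + N + \sigma_G(S_0) + (n_0-1)(1+M).
\end{equation}
Since $N$, $M$ and $n_0$ are known, inverting this identity recovers $\sigma_G(S_0)$ from a single oracle call to $\sigma^*_{n_0}(\cdot)$, which completes the Turing reduction and establishes $\#$P-hardness.

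The main obstacle is verifying the optimality of $S^\star$ by a case analysis on single-swap deviations. Replacing $s_0$ in $S^\star$ by any $V$-node or leaf forfeits the $1 + N$ contribution of the reward leaves of $s_0$, while gaining at most $n$ from additional diffusion in $G$; similarly, replacing any dummy $d_j$ by a $V$-node or leaf loses $1 + M$ while gaining at most $n$. The threshold $N, M > n$ makes every such swap strictly unprofitable, hence $S^\star$ is the unique argmax. A secondary, more technical point is to control the error introduced by realizing ``deterministic'' transmissions with finite-rate Hazard functions: choosing $C = \text{poly}(n)$ bounds the error in $\sigma^*_{n_0}(\FunMat')$ by $O(n^2 e^{-C})$, far smaller than the $N,M > n$ gaps that drive the case analysis, which is enough to recover $\sigma_G(S_0)$ to the required precision.
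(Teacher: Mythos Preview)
Your reduction is from the same source as the paper's---computing $\sigma(S_0)$ for a \emph{given} seed set $S_0$ under IC---and the overall logic is sound. The paper, however, uses a considerably simpler pinning gadget: for each $i\in S_0$ it attaches a directed chain of $n$ fresh vertices $v_{i,1}\to\cdots\to v_{i,n}\to i$ with probability-one edges, so that seeding $\{v_{i,1}:i\in S_0\}$ deterministically sweeps all $n\cdot n_0$ chain nodes and then launches exactly the original cascade from $S_0$, giving $\sigma^*_{n_0}=n\,n_0+\sigma(S_0)$. Because every chain has length $n$ and there are exactly $n_0$ chains, any competing seed set of size $n_0$ either misses a chain entirely (losing $n$ nodes while gaining at most $n$ in $G$) or seeds a chain at a non-root position (losing strictly); optimality of the chain roots is thus almost immediate. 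Your super-source-plus-dummies construction achieves the same effect but needs a genuine case analysis, and note that checking only single-swap deviations is not by itself enough to certify a global maximizer of a submodular function---you should phrase the argument as a direct upper bound on $\sigma(T)$ for arbitrary $T$ (which is easy here since the dummy components are disconnected). On the other hand, you are more careful than the paper about one point: the CTIC model as defined requires \emph{integrable} hazard functions, so transmission probability exactly one is not literally attainable; the paper glosses over this, while you explicitly control the $e^{-C}$ error. If you pursue that route fully, be aware that recovering $\sigma_G(S_0)$ \emph{exactly} from an approximate oracle value requires a bound on the denominator of $\sigma_G(S_0)$, not merely a gap argument for the optimizer.
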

\begin{proof}
We prove the theorem by reduction from a known $\#$P-hard function: the computation of the influence $\sigma(S_0)$ given a set of influencers $S_0$ of size $n_0$ (see Theorem 1 of \citep{wang2012scalable}). Indeed, let $\mathcal{CTIC}(\FunMat)$ be an Independent Cascade model defined on $\mathcal{G} = (\mathcal{V},\mathcal{E})$. We can construct a new graph $\mathcal{G}' = (\mathcal{V}',\mathcal{E}')$ as follows: for each influencer node $i\in S_0$, add a directed chain of $n$ nodes $\{v_{i,1},\dots,v_{i,n}\}\subset\mathcal{V}'$ and connect $v_{i,n}$ to $i$ by letting the transmission probabilities along the edges be all equal to one. Then, the maximum influence $\sigma^*_{n_0}$ is achieved with the nodes $S_0' = \{v_{i,1}\,:\,i\in S_0\}$ as influencer, and $\sigma^*_{n_0} = n\,n_0 + \sigma(S_0)$. The result follows from the $\#$P-hardness of computing $\sigma(S_0)$ given $S_0$.
\end{proof}

The standard way to approximate the maximum influence is to employ incremental methods where the quality of each potential influencer is assessed using a Monte-Carlo approach. 
In the following, we assume that the feasible set $\FunSet$ is convex and included in a ball of radius $R$. 
Also, the requirement of \Eq{eq:nonzero-F_ij}, that network edges correspond to non-zero Hazard functions, holds for every feasible policy $\FunMat\in\FunSet$. Therefore, the number of edges $E$ upper bounds the number of non-zero Hazard functions for any $\FunMat\in\FunSet$.

\begin{remark}
Although \Problem{probl:policy_influence} focuses on the minimization of the maximum influence, the algorithm presented in this paper is also applicable to the opposite task of influence maximization.
Having a common ground for solving these opposite problems can be particularly useful for applications where both opposing aims can interest different actors, for instance in market competition. For the maximization, our algorithm would use a gradient ascent instead of a gradient descent optimization scheme. While the performance of the algorithm in that case may be competitive to state-of-the-art influence maximization algorithms, the nonconvexity of this problem prevents us from providing any theoretical guarantees regarding the quality of the final solution.
\end{remark}

%=================================================================
\section{\methodName: an algorithm for monitoring Information Cascades}\label{sec:solution_and_proposed_algorithm}
%=================================================================
%
Bearing in mind the computational intractability of solving exactly the influence optimization problem, we propose to exploit the upper bound given in \Proposition{prop:bounds} as a heuristic for approximating the maximum influence. This approach can be seen as a \emph{convex relaxation} of the original NP-Hard problem, and allows the use of convex optimization algorithms for this particular problem. The relaxed optimization problem thus becomes:
\begin{equation}\label{eq:minrho}
\FunMat^* =  \argmin_{ \FunMat \in\FunSet } ~\HazSpec(\FunMat).
\end{equation}

When the feasible set $\FunSet$ is convex, this optimization problem is also convex and our proposed method called \emph{\methodName} uses a simple \emph{projected subgradient descent} (see e.g. \citep{bubeck2015convex}) in order to find its minimum and make sure that the solution lays in $\FunSet$. However, special care should be taken to perform the gradient step since, although the objective function $\HazSpec(\FunMat)$ admits a derivative \wrt the norm
\begin{equation}\label{eq:funMatNorm}\textstyle
\|\FunMat\| = \sqrt{\sum_{i,j} \left(\int_0^{+\infty}\!\!|\FunMat_{ij}(t)|dt\right)^2},
\end{equation}
the space of matrix functions equipped with this norm is only a Banach space in the sense that the norm $\|\FunMat\|$ cannot be derived from a well chosen scalar product. Since gradients only exist in Hilbert spaces, gradient-based optimization methods are not directly applicable. 

In the \methodName algorithm, the gradient and projection steps are performed on the \emph{integral} of the Hazard functions $\int_0^{+\infty}\!\FunMat_{ij}(t)dt$ by solving the optimization problem bellow:
\begin{equation}\label{eq:projStep}
\FunMat^* \op{=} \argmin_{\hat{\FunMat}\in\FunSet}  \left\|\int_0^{+\infty} \!\!\! \left(\!\!\hat{\FunMat}(t) - \FunMat(t)\right)dt + \eta \, u_\FunMat u_\FunMat^\top\right\|_2\!,
\end{equation}
where $\eta > 0$ is a positive gradient step, $u_\FunMat$ is the eigenvector associated to the largest eigenvalue of the matrix $\int_0^{+\infty} \frac{\FunMat(t) + \FunMat(t)^\top}{2}dt$, and $u_\FunMat u_\FunMat^\top$ is a subgradient of the objective function, as provided by the following proposition.

\begin{proposition}
A subgradient of the objective function $f(M) = \rho\big(\frac{M+M^\top}{2}\big)$ in the space of integrated Hazard functions, where $M$ is a matrix, is given by the matrix:
\begin{equation}
\nabla f(M) = u_M u_M^\top,
\end{equation}
where $u_M$ is the eigenvector associated to the largest eigenvalue of the matrix $\frac{M+M^\top}{2}$.
\end{proposition}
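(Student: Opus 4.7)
The plan is to write $f(M)$ as a pointwise supremum of \emph{linear} functions of $M$ and then read off a subgradient from the ``active'' linearization at $M$. First I would note that since $S(M) := (M+M^\top)/2$ is symmetric its eigenvalues are real, and because the integrated Hazard matrix has non-negative entries Perron--Frobenius gives $\rho(S(M)) = \lambda_{\max}(S(M))$. The Rayleigh--Ritz variational formula then yields
\begin{equation*}
f(M) \;=\; \lambda_{\max}(S(M)) \;=\; \max_{\|u\|_2=1} u^\top S(M) u \;=\; \max_{\|u\|_2=1} u^\top M u,
\end{equation*}
where the last identity uses that $u^\top M^\top u$ and $u^\top M u$ are equal scalars.

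Next I would observe that for every fixed unit vector $u$, the map $M \mapsto u^\top M u = \sum_{ij} u_i u_j M_{ij} = \langle u u^\top,\, M\rangle_F$ is linear in $M$, with Frobenius gradient $u u^\top$. Hence $f$ is a pointwise maximum of linear (in particular convex) functions, so $f$ is convex, and the gradient of the linearization achieving the max should provide a subgradient.

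To make this rigorous I would verify the subgradient inequality directly: let $u_M$ be a unit eigenvector for $\lambda_{\max}(S(M))$, so that $u_M^\top M u_M = f(M)$. Then for any $M'$,
\begin{align*}
f(M') &= \max_{\|u\|_2=1} u^\top M' u \;\geq\; u_M^\top M' u_M \\
&= u_M^\top M u_M + u_M^\top (M'-M) u_M \\
&= f(M) + \langle u_M u_M^\top,\, M' - M\rangle_F,
\end{align*}
which is exactly the defining subgradient inequality in the Frobenius inner product. Hence $u_M u_M^\top \in \partial f(M)$.

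The main obstacle is really just bookkeeping about which inner product structure is used. The paper performs the update on the finite-dimensional object $\int_0^{+\infty}\!F(t)\,dt \in \real^{n\times n}$, so the natural choice is the Frobenius inner product, under which $u u^\top$ is indeed the gradient of $M \mapsto u^\top M u$. The only other subtle point is invoking Perron--Frobenius so that $\rho$ coincides with $\lambda_{\max}$; otherwise $f$ would be $\max_i |\lambda_i|$, which is not a pointwise supremum of linear functions and the argument would need to be adapted (e.g.\ by also considering $-u_M u_M^\top$ when the spectral radius is achieved by a negative eigenvalue).
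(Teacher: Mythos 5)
Your proof is correct and follows essentially the same route as the paper: write $f(M)$ via the variational (Rayleigh--Ritz) characterization $\max_{\|u\|_2=1} u^\top M u$ and verify the subgradient inequality at the maximizing eigenvector $u_M$, identifying $u_M^\top \varepsilon\, u_M = \langle u_M u_M^\top, \varepsilon\rangle$. Your only addition is making explicit the Perron--Frobenius step equating $\rho$ with $\lambda_{\max}$ for the nonnegative integrated Hazard matrix, which the paper leaves implicit.
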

\begin{proof}
For any matrix $M$, let $f(M) = \rho\big(\frac{M+M^\top}{2}\big) = \max_{x~:~\|x\|_2=1} x^\top M x$, and $u_M$ be such an optimal vector. Then, we have $f(M\op{+}\varepsilon) = u_{M\op{+}\varepsilon}^\top (M\op{+}\varepsilon) u_{M+\varepsilon} \geq u_M^\top (M+\varepsilon) u_M = f(M) + u_M^\top \varepsilon u_M$, and, since $u_M^\top\,\varepsilon\,u_M = \left\langle u_M u_M^\top, \varepsilon \right\rangle$, $u_M u_M^\top$ is indeed a subgradient for $f(M)$.
\end{proof}

\begin{algorithm}[t]
\caption{\textbf{--} \methodName meta-algorithm}
\label{alg:generic}
\begin{algorithmic}[1]
    \REQUIRE{feasible set $\FunSet\subset\real_+\rightarrow\real_+^{n\times n}$, radius $R>0$ of $\FunSet$, initial Hazard matrix $\FunMat\in\FunSet$, approx. parameter $\epsilon>0$}
    \ENSURE{Hazard matrix  $\FunMat^*\in\FunSet$}
   \vspace{1mm}
   \STATE $\FunMat^* \leftarrow \FunMat$
   \STATE $T \leftarrow \lceil\frac{R^2}{\epsilon^2}\rceil$
   \FOR{$i = 1$ to $T-1$}
			\STATE $u_\FunMat \op{\leftarrow}$\!\!
			the eigenvector assoc. to spectral radius $\HazSpec(\FunMat)$
			\STATE $\eta \leftarrow \frac{R}{\sqrt{i}}$
			\STATE $\FunMat \leftarrow \argmin_{\hat{\FunMat}\in\FunSet} \left\|\int_0^{+\infty} \!\! \left(\hat{\FunMat}(t) - \FunMat(t)\right)dt + \eta \, u_\FunMat u_\FunMat^\top\right\|_2$
   		\STATE $\FunMat^* \leftarrow \FunMat^* + \FunMat$
   \ENDFOR
	 \RETURN $\frac{1}{T}\FunMat^*$
\end{algorithmic}
\end{algorithm}

The projection step of line~6 in \Alg{alg:generic} is an optimization problem on its own, and \methodName algorithm is practical if and only if this optimization problem is simple enough to be solved. In the next sections we will see that, in many cases, this optimization problem can be solved in near linear time \wrt the number of edges of the network (\ie $\mathcal{O}(E\ln E)$), and is equivalent to a projection on a simplex.

%------------------------------------------------------
\subsection{Convergence and scalability}
%------------------------------------------------------
%
Due to the convexity of the optimization problem in \Eq{eq:minrho}, \methodName finds the global miniminum of the objective function and, as such, may be a good candidate to solve \Problem{probl:policy_influence}.
The complexity of the \methodName algorithm depends on the complexity of the projection step in \Eq{eq:projStep}. Each step of the gradient descent requires the computation of the first eigenvector of an $n\times n$ matrix, which can be computed in $\mathcal{O}(E\ln{E})$, where $E$ is the number of edges of the underlying graph. In most real applications, the underlying graph on which the information is diffusing is \emph{sparse}, in the sense that its number of edges $E$ is small compared to $n^2$. 
\begin{proposition}
Assume that $\FunSet$ is a convex set of Hazard matrices included in a ball of radius $R>0$ \wrt the norm in \Eq{eq:funMatNorm}, and that the projection step in \Eq{eq:projStep} has complexity at most $\mathcal{O}(E\ln{E})$. Then, the \methodName algorithm described in \Alg{alg:generic} converges to the minimum of \Eq{eq:minrho}. Moreover, the complexity of the algorithm is $\mathcal{O}(\frac{R^2}{\epsilon^2}E\ln{E})$.
\end{proposition}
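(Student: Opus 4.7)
My plan is to recognize that \Alg{alg:generic} is exactly the textbook projected subgradient method applied to the convex program \Eq{eq:minrho}, reformulated in the finite-dimensional space of integrated Hazard matrices, and then to assemble the standard convergence rate together with the per-iteration cost to obtain both claims.

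First I would reduce the problem to a finite-dimensional convex optimization. The objective $\HazSpec(\FunMat)$ depends on $\FunMat$ only through $\bar M(\FunMat)=\int_0^{+\infty}\FunMat(t)dt\in\real^{n\times n}$, and the map $\FunMat\mapsto\bar M(\FunMat)$ is linear and carries the norm of \Eq{eq:funMatNorm} onto the Frobenius norm on $\real^{n\times n}$. Thus the image $\bar\FunSet=\{\bar M(\FunMat):\FunMat\in\FunSet\}$ is convex and contained in the Frobenius ball of radius $R$, the feasible problem \Eq{eq:minrho} is equivalent to minimizing $f(M)=\rho\!\left(\frac{M+M^\top}{2}\right)$ over $M\in\bar\FunSet$, and the step in \Eq{eq:projStep} is precisely the Euclidean (Frobenius) projected subgradient update $M_{i+1}=\Pi_{\bar\FunSet}(M_i-\eta_i\,\nabla f(M_i))$ with $\nabla f(M_i)=u_\FunMat u_\FunMat^\top$ as already identified. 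Convexity of $f$ follows from the variational form $f(M)=\max_{\|x\|_2=1}x^\top\!\bigl(\tfrac{M+M^\top}{2}\bigr)x$ (a supremum of linear functions in $M$).

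Next I would invoke the standard projected subgradient analysis (e.g.\ Theorem 3.2 of \citep{bubeck2015convex}). Two quantities need to be checked: the diameter of $\bar\FunSet$, which is bounded by $2R$ by hypothesis; and a uniform bound on the subgradients, which is immediate here since $\|u_\FunMat u_\FunMat^\top\|_F=\|u_\FunMat\|_2^2=1$. With step size $\eta_i=R/\sqrt{i}$, the classical guarantee gives
\begin{equation}
f\!\left(\tfrac{1}{T}\textstyle\sum_{i=1}^{T} M_i\right)-\min_{M\in\bar\FunSet} f(M)\ \le\ \frac{R}{\sqrt{T}},
\end{equation}
so the choice $T=\lceil R^2/\epsilon^2\rceil$ in line~2 of the algorithm yields an $\epsilon$-approximate minimizer, and in particular $f$ of the returned average converges to the optimum as $\epsilon\to 0$. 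Jensen's inequality together with convexity of $\bar\FunSet$ ensures that the averaged iterate itself remains feasible, which justifies returning the average on line~9.

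For the complexity claim I would count the cost of one outer iteration. The top eigenvector $u_\FunMat$ of the symmetrized integrated Hazard matrix can be computed in $\mathcal{O}(E\ln E)$ time by power iteration, since the matrix is sparse with at most $2E$ nonzero entries (by \Eq{eq:nonzero-F_ij}) and a logarithmic number of matrix-vector products suffice to obtain sufficient accuracy for the subgradient. The projection step is $\mathcal{O}(E\ln E)$ by hypothesis, and the rank-one update $\eta\,u_\FunMat u_\FunMat^\top$ plus the averaging are negligible. Summing over the $T=\lceil R^2/\epsilon^2\rceil$ iterations gives the announced $\mathcal{O}\!\left(\tfrac{R^2}{\epsilon^2}E\ln E\right)$ bound. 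The main delicate point is the reduction to a Hilbert-space setting described in the first paragraph; once the equivalence with Frobenius-norm projected subgradient descent on $\bar\FunSet$ is established, the rest is a direct citation of the classical rate and a bookkeeping argument on sparse eigenvector computation.
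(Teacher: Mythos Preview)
Your proposal is correct and follows exactly the paper's approach: reduce to projected subgradient descent on the finite-dimensional set $\mathbbm{H}=\{\int_0^{+\infty}\FunMat(t)\,dt:\FunMat\in\FunSet\}$ and invoke the standard rate from \citep{bubeck2015convex}. The paper's own proof is in fact only two sentences stating precisely this reduction and citation; you have simply spelled out the details (convexity of $f$, the bound $\|u_\FunMat u_\FunMat^\top\|_F=1$, feasibility of the averaged iterate, and the per-iteration eigenvector cost) that the paper leaves implicit.
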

\begin{proof}
This is a direct application of the projected subgradient descent to the problem:
\begin{equation}
\HazMat^* = \argmin_{\HazMat\in\mathbbm{H}} \rho\left(\frac{\HazMat + \HazMat^\top}{2}\right),
\end{equation}
where $\mathbbm{H} = \{\int_0^{+\infty} \!\! \FunMat(t)dt\in\mathbbm{R}^{n\times n}~:~\FunMat\in\FunSet\}$ is the set of feasible Hazard matrices. The convergence rate of such an algorithm can be found in \citep{bubeck2015convex}.
\end{proof}

\begin{remark} The corresponding maximization problem is not convex anymore and only convergence to a local maximum can be expected. However, when the changes in the Hazard functions are relatively small (\eg inefficient control actions, or only a limited number of treatments available to distribute), then \methodName achieves fairly good performance.
\end{remark}

\begin{algorithm}[t]
\caption{\textbf {--} \methodName partial quarantine problem}
\label{alg:pqp}
\begin{algorithmic}[1]
    \REQUIRE{graph $\mathcal{G} = (\mathcal{V},\mathcal{E})$, matrices of Hazard functions \emph{before} and \emph{after} treatment $\FunMat,\hat{\FunMat}\in\FunSet$, approximation parameter $\epsilon>0$, number of treatments $\budget$}
    \ENSURE{matrix of Hazard functions $\FunMat^*\in\FunSet$}
		\vspace{1mm}
	 \STATE $X \leftarrow 0$, $X^* \leftarrow 0$
	 \STATE $F \leftarrow \int_0^{+\infty}\FunMat(t)dt$
	 \STATE $\DeltaF \leftarrow \int_0^{+\infty}(\hat{\FunMat}(t)dt - \FunMat(t))dt$
   \STATE $R \leftarrow \sqrt{\budget}\max_{ij}\DeltaF_{ij}$
	 \STATE $T \leftarrow \lceil\frac{R^2}{\epsilon^2}\rceil$
   \FOR{$i = 1$ to $T-1$}
			\STATE $M \leftarrow F + X\elmult\DeltaF$
			\STATE $u \leftarrow$ the largest eigenvector of $\frac{1}{2}(M + M^\top)$
			\STATE $Y \leftarrow X\elmult\DeltaF - \frac{R}{\sqrt{i}} u u^\top$\hfill \texttt{/\!/\,\rotatebox[origin=c]{90}{\raisebox{-1mm}{$\Lsh$}}\,projection step (\Alg{alg:projStep})}
			\STATE $X \leftarrow \argmin_{X'\in[0,1]^{n\times n}, \|X'\|_1 \leq \budget} \left\|X' \elmult \DeltaF - Y\right\|_2$ 
   		\STATE $X^* \leftarrow X^* + X$
   \ENDFOR
	 \RETURN $\FunMat^* = (1-\frac{1}{T}X^*)\elmult\FunMat + \frac{1}{T}X^*\elmult\hat{\FunMat}$
\end{algorithmic}
\end{algorithm}
\begin{algorithm}[t]
\caption{\textbf{--} Projection step for partial quarantine (\Alg{alg:pqp})\!\!}
\label{alg:projStep}
\begin{algorithmic}[1]
    \REQUIRE{
		$\delta, y \in \real^E$, budget $\budget\in(0,E)$}
    \ENSURE{control actions vector $x'$}
		\vspace{1mm}
	 \FOR{$i=1$ to $E$}
	   \STATE $\mu_{i} \leftarrow 2\delta_i y_i$
	   \STATE $\mu_{E+i} \leftarrow 2\delta_i (y_i - \delta_i)$
   \ENDFOR
	 \STATE sort $\mu$ into $\mu_{\pi(1)}\geq\mu_{\pi(2)}\geq\dots\geq\mu_{\pi(2E)}$
	 \STATE $d \leftarrow 0$, $s \leftarrow 0$, $i \leftarrow 1$
	 \WHILE{$s < \budget$ and $\mu_{\pi(i)}\geq 0$}
		   \STATE $d \leftarrow d + \ind\{\pi(i) \leq E\}\frac{1}{2\delta_{\pi(i)}^2} - \ind\{\pi(i) > E\}\frac{1}{2\delta_{\sigma(i)-E}^2}$
	     \STATE $s \leftarrow s + d(\mu_{\pi(i)} - \mu_{\pi(i+1)})$
	     \STATE $i \leftarrow i+1$
	 \ENDWHILE
	 \STATE $z \leftarrow \max\{0, \mu_{\sigma(i)} + \frac{s - \budget}{d}\}$
	 \RETURN $x'$ \st $x'_i = \max\{0, \min\{\frac{2\delta_i y_i - z}{2\delta_i^2}, 1\}\}$
\end{algorithmic}
\end{algorithm}

%=================================================================
\section{Case studies}\label{sec:case_studies}
%=================================================================
%
In this section, we illustrate the generality of our framework  by reframing well-known influence optimization problems using \Problem{probl:policy_influence} and deriving the corresponding variants of the \methodName algorithm. For the sake of notations' simplicity, we denote as $M \elmult M'$ the Hadamard product between the two matrices (\ie coordinate-wise multiplication), as 
$\DeltaF = \int_0^{+\infty} \! (\hat{\FunMat}(t) - \FunMat(t))dt$ the matrix with the integrated coordinate-wise difference of two Hazard matrices in time, and as $\one\in\real^n$ the all-one vector (see notations in \Tab{tab:notations}).

%-----------------------------------------------------------------
\subsection{Partial quarantine}
%-----------------------------------------------------------------
%
The \emph{quarantine} problem addresses the removal of a small number of edges in order to minimize the spread of the contagion. Such an approach is highly interventional, in the sense that it totally removes edges, but in order to be practical it has to remain at low scale and affect a small amount of edges. This is the reason why it is mostly appropriate for dealing with the initial very few infections.

The \emph{partial quarantine} setting is a relaxation where one is interested to decrease the transmission probability along a set of targeted edges by using local and expensive actions.

\begin{definition}\emph{Partial quarantine}~--~\label{def:quarantine_IC}
Consider that a marketing campaign has $\budget$ control actions to distribute in a network $\mathcal{G} = (\mathcal{V},\mathcal{E})$. For each edge $(i,j)\in \mathcal{E}$, let $\FunMat_{ij}$ and $\hat{\FunMat}_{ij}$ be the Hazard matrices \emph{before} and \emph{after} applying control actions, respectively. If $X\in[0,1]^{n\times n}$ is the control actions matrix and $X_{ij}$ represents the amount of suppressive action taken on edge $(i,j)$, then the set of feasible policies is:
\begin{equation}
\!\FunSet \op{=} \left\{\!(1\!\op{-}\!X)\op{\elmult}\FunMat + X\op{\elmult}\hat{\FunMat}\,:\,X\!\op{\in}[0,1]^{n\times n}\!\!,\,\|X\|_1\!\op{\leq} \budget\!\right\}\!\!.\!\!\!\!\!
\end{equation}
\end{definition}

\noindent\emph{Example}:~For a non-negative scalar $\epsilon \geq 0$, we may consider $\hat{\FunMat} = (1\op{-}\epsilon)\FunMat$ in order to model the suppression of selected transmission rates; formally: $\FunSet = \left\{(1-\epsilon X)\elmult\FunMat : X\in[0,1]^{n\times n}, \|X\|_1 \leq \budget\right\}$. Importantly, for the special case where $\epsilon = 1$, this problem becomes equivalent to the setting discussed in \citep{tong2012gelling} and \citep{van2011decreasing}.

A straightforward adaptation of \Alg{alg:generic} to this setting leads to the \methodName algorithm for partial quarantine described in \Alg{alg:pqp}. The projection step is performed by \Alg{alg:projStep} on the flattened versions $x', \delta, y\in\real^E$ of the matrices $X'$, $\Delta$ and $Y$, and the parameter $R$ is chosen to upper bound $\max_{\FunMat'\in\FunSet}\|\FunMat' - \FunMat\|_2 = \max_{X\in[0,1]^{n\times n},\|X\|_1\leq k}\|X\elmult\Delta\|_2$.

\begin{lemma}
The projection step of \Alg{alg:generic} for the partial quarantine setting of \Definition{def:quarantine_IC} is:
\begin{equation}\label{eq:lemeq}
X^* = {\arg\min}_{x'\in[0,1]^E,~\|x'\|_1 \leq \budget} \left\|x' \elmult \delta - y\right\|_2,
\end{equation}
where $\delta$ and $y$ are flattened version of, respectively, $\DeltaF$ and $Y = X\elmult\DeltaF - \eta u_\FunMat u_\FunMat^\top$. Moreover, this problem can be solved in time $\mathcal{O}(E\ln{E})$ with \Alg{alg:projStep}, where $E$ is the number of edges of the network.
\end{lemma}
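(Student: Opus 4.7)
The plan is to reduce the projection step \Eq{eq:projStep} to the box-constrained quadratic program of \Eq{eq:lemeq} using the linear parameterization of $\FunSet$, and then derive \Alg{alg:projStep} from the KKT conditions of that QP.

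First, I would exploit that every $\FunMat' \in \FunSet$ is in one-to-one correspondence with some $X' \in [0,1]^{n\times n}$ satisfying $\|X'\|_1 \leq \budget$ via $\FunMat' = (1-X')\elmult\FunMat + X'\elmult\hat{\FunMat}$, and that the current iterate inside \Alg{alg:generic} corresponds to some such $X$. Linearity of the integral then rewrites the integrated difference as $(X'-X)\elmult\DeltaF$, so the argument of the norm in \Eq{eq:projStep} becomes $X'\elmult\DeltaF - Y$ with $Y = X\elmult\DeltaF - \eta\,u_{\FunMat} u_{\FunMat}^\top$. Since $\DeltaF_{ij} = 0$ whenever $(i,j) \notin \mathcal{E}$, the entries of $X'$ on non-edges contribute only an additive constant to the objective while still consuming $\ell_1$ budget; at any optimum they are zero. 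The problem therefore collapses to the edge-indexed vector $x' \in [0,1]^E$ of \Eq{eq:lemeq}.

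Next, I would attack \Eq{eq:lemeq} directly. Its objective $\sum_i (x'_i \delta_i - y_i)^2$ is coordinate-separable and coupled only through $\sum_i x'_i \leq \budget$. Introducing a Lagrange multiplier $z \geq 0$ for that single constraint and minimizing each coordinate on $[0,1]$ in closed form yields $x'_i(z) = \max\{0,\min\{1,(2\delta_i y_i - z)/(2\delta_i^2)\}\}$. The aggregate $s(z) = \sum_i x'_i(z)$ is non-increasing and piecewise linear in $z$, with exactly $2E$ breakpoints: $\mu_i = 2\delta_i y_i$ where $x'_i$ leaves $0$, and $\mu_{E+i} = 2\delta_i(y_i - \delta_i)$ where $x'_i$ saturates at $1$. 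Between consecutive breakpoints, the absolute slope of $s$ (as a function of $-z$) equals the sum of $1/(2\delta_j^2)$ taken over the currently interior coordinates $j$.

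Given this shape, \Alg{alg:projStep} is essentially forced: sort the $2E$ breakpoints in decreasing order in $O(E\ln E)$ time (line~5), then sweep $z$ downward while updating the running slope $d$ (increment by $1/(2\delta_{\pi(i)}^2)$ when a coordinate enters, decrement by $1/(2\delta_{\pi(i)-E}^2)$ when one saturates, line~8) and the running sum $s$ (line~9), stopping either when $s$ first reaches $\budget$ (linear interpolation inside the current piece yields $z^* = \mu_{\pi(i)} + (s-\budget)/d$, line~12) or when all remaining breakpoints are non-positive (forcing $z^* = 0$ by complementary slackness, guarded by $\mu_{\pi(i)} \geq 0$ in line~7 and the outer $\max$ in line~12). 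Substituting $z^*$ back into the per-coordinate closed form recovers $x'$ in line~13, and the $O(E\ln E)$ bound is dominated by the sort. The main obstacle is the sign bookkeeping for $d$ across the two breakpoint types and a clean treatment of the $z^* = 0$ boundary; everything else follows mechanically from the separable structure and the piecewise-linear shape of $s(z)$.
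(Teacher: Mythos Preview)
Your proposal is correct and follows essentially the same route as the paper: reduce \Eq{eq:projStep} to \Eq{eq:lemeq} via the linear parameterization of $\FunSet$, then solve \Eq{eq:lemeq} through the KKT closed form $x'_i(z)=\max\{0,\min\{1,(2\delta_i y_i-z)/(2\delta_i^2)\}\}$ and a sorted linear search for the right multiplier $z$, with the sort dominating at $\mathcal{O}(E\ln E)$. Your exposition of the breakpoint sweep (the $2E$ thresholds $\mu_i$ and $\mu_{E+i}$, the running slope $d$, and the $z^*=0$ boundary via complementary slackness) is considerably more detailed than the paper's, which simply cites the $L_1$-ball projection of Duchi et al.\ and calls the procedure ``a simple linear search''.
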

\begin{proof}
\Eq{eq:lemeq} directly follows from \Eq{eq:projStep} and the definition of $\FunSet$. \Alg{alg:projStep} is an extended version of the $L_1$-ball projection algorithm of \citep{DuchiEfficient2008}. KKT conditions for the optimization problem of \Eq{eq:lemeq} imply that $\exists z>0$ \st $\forall i$, $x_i' = \max\{0, \min\{\frac{2\delta_i y_i - z}{2\delta_i^2}, 1\}\}$. The algorithm is a simple linear search for this value.
Finally, the sorting step (\Alg{alg:projStep}, line 5) has the highest complexity of $\mathcal{O}(E\ln{E})$, and the loops perform at most $2E$ iterations, hence an overall complexity of $\mathcal{O}(E\ln{E})$.
\end{proof}

%-----------------------------------------------------------------
\subsection{Partial node immunization}\label{sec:node_partial_IC}
%-----------------------------------------------------------------
%
More often, control actions can only be performed on the nodes rather than the edges of a network, for example when considering targeted advertisements. In such a case, the effect must be aggregated over nodes in the following way.

\begin{definition}\emph{Partial node immunization}~--~\label{def:node_partial_IC} Consider that a marketing campaign has $\budget$ control actions to distribute in a network $\mathcal{G} = (\mathcal{V},\mathcal{E})$. For each edge $(i,j)\in \mathcal{E}$, let $\FunMat_{ij}$ and $\hat{\FunMat}_{ij}$ be the Hazard matrices \emph{before} and \emph{after} applying control actions, respectively. If $x\in[0,1]^n$ is the control actions vector and $x_i$ represents the amount of suppressive action taken on node $i$, then the set of feasible policies can be expressed as:
\begin{equation}
\!\,\FunSet \op{=} \left\{(1\!\op{-}\!x\one^\top)\op{\elmult}\FunMat + x\one^\top\op{\elmult}\hat{\FunMat}\,:\,x\!\op{\in}\![0,1]^n\!\!,\,\|x\|_1\!\op{\leq}\!\budget\right\}\!\!.\!\!\!
\end{equation}
\end{definition}

This setting corresponds to partial quarantine in which all outgoing edges of a node are impacted by a single control action. When $\hat{\FunMat} = 0$, this problem corresponds to the node removal problem (or vaccination), that consists in removing $\budget$ nodes from the graph in advance in order to minimize the contagion when that will appear (see \citep{tong2010vulnerability}).

Given a vector $x$, the projection problem to solve is:
\begin{equation*} \label{eq:projection2}
\begin{split}
x^* &= \argmin_{x'\in[0,1]^n, \|x'\|_1 \leq \budget} \left\| (x'\one^\top) \elmult \DeltaF - Y\right\|_2\\
&= \argmin_{x'\in[0,1]^n, \|x'\|_1 \leq \budget} \sum_{i} x_i^2 \bigg(\!\sum_j \DeltaF_{ij}^2\!\bigg) - 2x_i \bigg(\!\sum_j \DeltaF_{ij} Y_{ij}\!\bigg)\\
&= \argmin_{x'\in[0,1]^n, \|x'\|_1 \leq \budget} \left\| x' \elmult \delta' - y'\right\|_2,\hspace{2.2cm}\myEqCounter
\end{split}
\end{equation*}
where $\delta'_i = \sqrt{\sum_j \DeltaF_{ij}^2}$ and $y'_i = \frac{\sum_{j}\DeltaF_{ij} Y_{ij}}{\sqrt{\sum_j \DeltaF_{ij}^2}}$. Hence we can apply the projection step of \Alg{alg:projStep} for the partial node immunization problem using $\delta'$ and $y'$, and its complexity is $\mathcal{O}(n\ln{n})$.

\begin{table}[!b]
\beforecaptvskip
\centering
\footnotesize
\begin{tabular}{l|r|r|r}
	\toprule
	\textbf{Network} & \textbf{Nodes} & \textbf{Edges} & \textbf{Nodes in largest SCC} \\
	\midrule
	\dset{Facebook} &  $4,039$ &  $88,234$ &  $4,039$\,\,::    $100.0$\% \\
	\dset{Gnutella} & $62,586$ & $147,892$ & $14,149$\,\,:: \ \,$22.6$\% \\
	\dset{Epinions} & $75,879$ & $508,837$ & $32,223$\,\,:: \ \,$42.5$\%  \\
	\dset{Airports}  &    $332$ &   $2,126$ &    $332$\,\,::    $100.0$\% \\
	\bottomrule
\end{tabular}
\caption{Details of the benchmark real networks. The last column is the size of the strongly connected component.}
\label{tab:networks}
\end{table}

\begin{remark}
Since the upper bound of \Proposition{prop:bounds} holds as well for SIR epidemics \citep{kermack1932contributions} (see also \citep{scaman2015anytime}), this setting may also be used to reduce the spread of a disease using, for example, medical treatments or vaccines. More specifically, the Hazard matrix for an SIR epidemic is:
\begin{equation}\textstyle
\mathcal{H} = \ln\left(1+\frac{\beta}{\delta}\right) A,
\end{equation}
where $\delta$ is the recovery (or removal) rate and $\beta$ is the transmission rate along edges of the network, and $A$ the adjacency matrix. Then, a medical treatment may increase the recovery rate $\delta$ for targeted nodes, thus decreasing all Hazard functions on its outgoing edges, and the partial node immunization setting is applicable.
\end{remark}

\newcommand{\basicsubfigsize}{0.215}
\newcommand{\hsepar}{0.7pt}
\begin{figure*}[!t] \footnotesize
%=== 1st line of figures
	\newcommand{\Xcut}{19.5}
	\hspace{-3.1pt}
	\subfigure{
		\begin{picture}(0,50)
			\put(0,50){\large{$\rho$}}
		\end{picture}
	}%
	\hspace{1pt}%
	\subfigure{
		\FPeval{\subfigsize}{(1+(\Xcut-0)/197)*\basicsubfigsize}
		\includegraphics[width=\subfigsize\linewidth, viewport=17 186 198 327, clip=true]{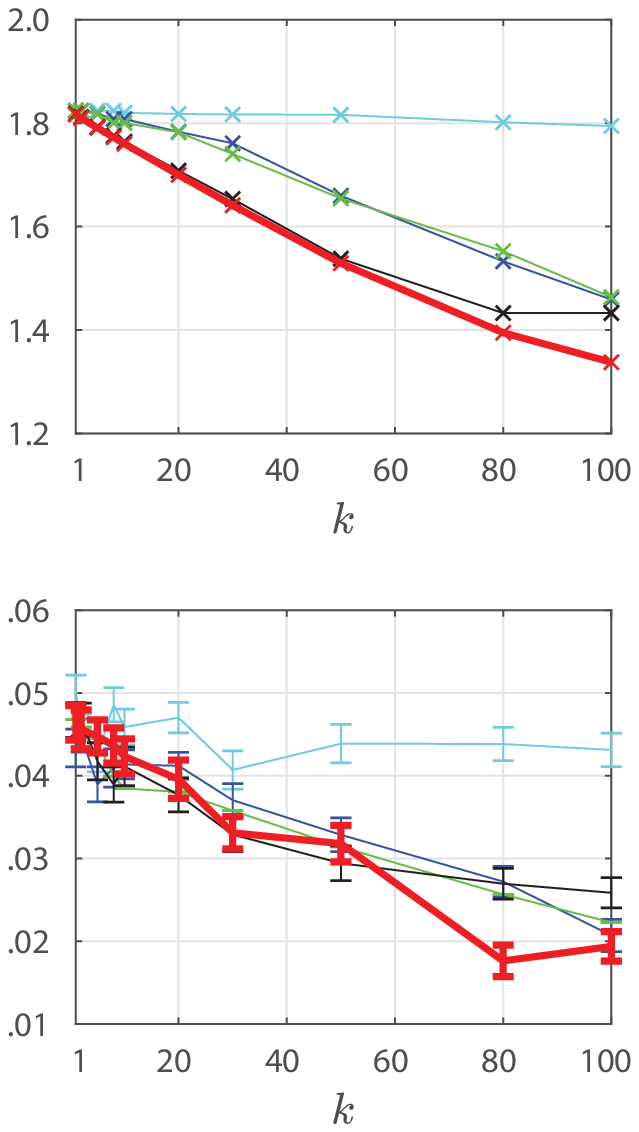}%
	}%
	\hspace{\hsepar}%
	\subfigure{
		\FPeval{\subfigsize}{(1+(\Xcut-0)/197)*\basicsubfigsize}
		\includegraphics[width=\subfigsize\linewidth, viewport=16 186 198 327, clip=true]{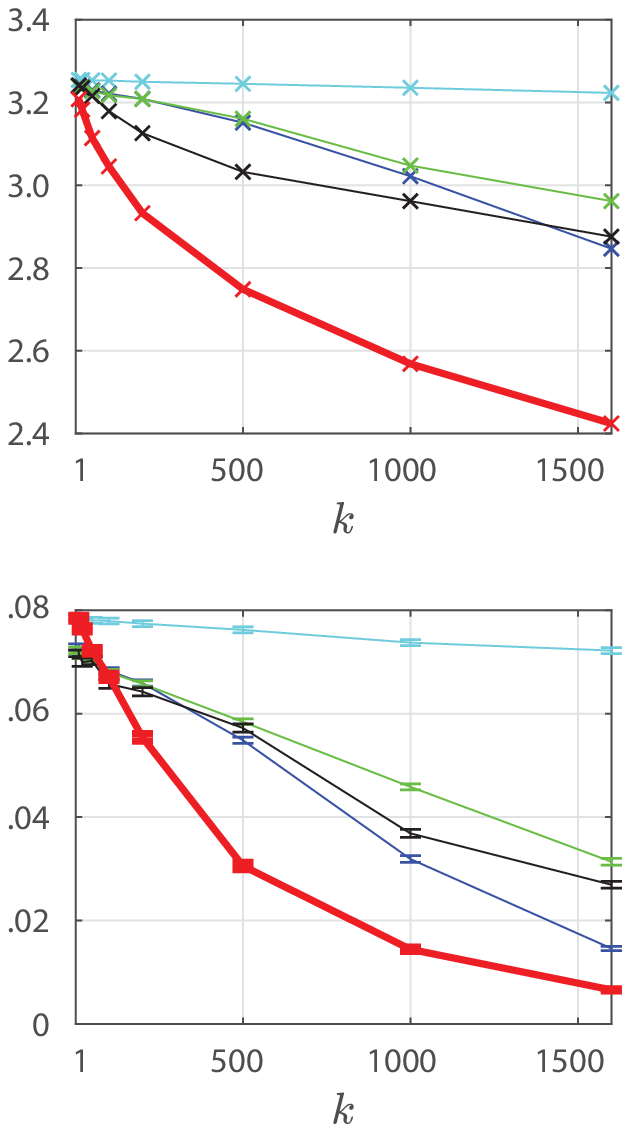}%
	}%
	\hspace{\hsepar}%
	\subfigure{
		\FPeval{\subfigsize}{(1+(\Xcut-0)/197)*\basicsubfigsize}
		\includegraphics[width=\subfigsize\linewidth, viewport=16 186 198 327, clip=true]{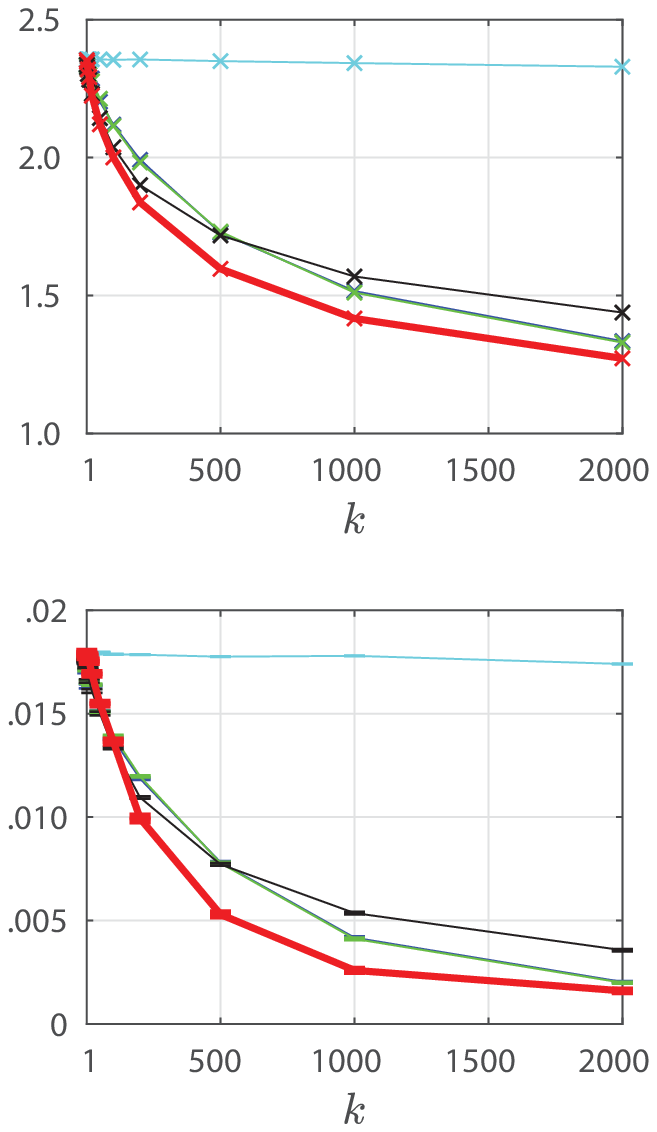}%
	}%
	\hspace{\hsepar}%
	\subfigure{
		\FPeval{\subfigsize}{(1+(\Xcut-0)/197)*\basicsubfigsize}
		\includegraphics[width=\subfigsize\linewidth, viewport=90 186 271 327, clip=true]{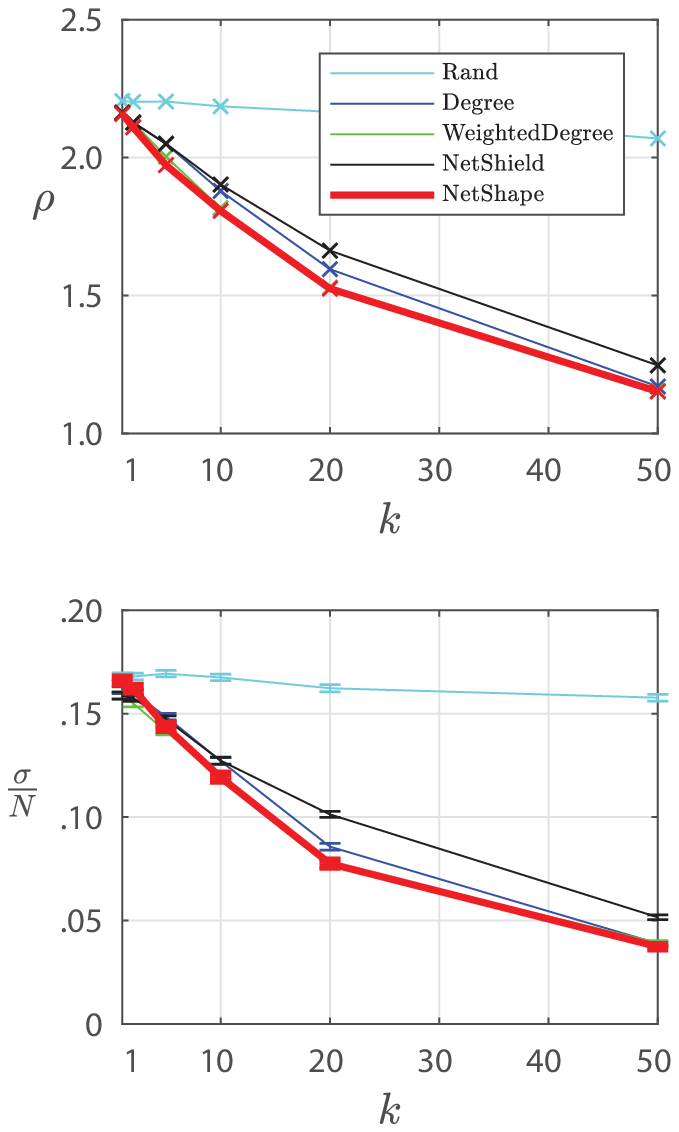}%
	}%	
\vspace{-6pt}%
\addtocounter{subfigure}{-6}
\\%
%
%=== 2nd line of figures	
	\renewcommand{\Xcut}{19.5}
	\subfigure{
		\begin{picture}(0,50)
			\put(-2,59){\Large{$\frac{\sigma}{n}$}}
		\end{picture}
	}%
	%\hspace{0.1pt}%
  \subfigure[\dset{Facebook}]{\label{fig:res_facebook}
		\FPeval{\subfigsize}{(1+(\Xcut+5)/197)*\basicsubfigsize}
		\includegraphics[width=\subfigsize\linewidth, viewport=17 0.3 202 163, clip=true]{facebook_finetuned.eps}%
	}%
	\hspace{-2pt}%
	\subfigure[\dset{Gnutella}]{\label{fig:res_gnutella}
		\FPeval{\subfigsize}{(1+(\Xcut+5)/197)*\basicsubfigsize}
		\includegraphics[width=\subfigsize\linewidth, viewport=16 0.3 201 163, clip=true]{gnutella_finetuned.eps}%
	}%
	\hspace{-4pt}%
	\subfigure[\dset{Epinions}]{\label{fig:res_epinions}
		\FPeval{\subfigsize}{(1+(\Xcut+4)/197)*\basicsubfigsize}
		\includegraphics[width=\subfigsize\linewidth, viewport=13 -0.4 198 163, clip=true]{epinions_finetuned.eps}%
	}%
	\hspace{0.5pt}%
	\subfigure[\dset{Airports}]{\label{fig:res_airport}
		\FPeval{\subfigsize}{(1+(\Xcut-0)/197)*\basicsubfigsize}
		\includegraphics[width=\subfigsize\linewidth, viewport=90 0 270.9 163, clip=true]{airport_finetuned.eps}%
	}%
	\beforecaptvskip
	\caption{\footnotesize The effectiveness of the compared policies on benchmark real networks evaluated by two evaluation measures. For each network, at the top row is plotted the $\HazSpec(\FunMat)$ \vs budget $k$, and at the bottom row the expected proportion of infected nodes $\frac{\sigma}{n}$ \vs $k$. (a)~\dset{Facebook} network, by generating infection rates $p \op{\in} \{.0001, .001, .01\}$; (b)~\dset{Gnutella} network with $p \op{\in} \{.1, .3, .6\}$; (c)~\dset{Epinions} network with $p \op{\in} \{.005, .005, .05\}$; (d)~\dset{Airports} network with the original graph weights. Lower values are better.}\label{fig:real_data}
\end{figure*}

%--------------------------------------------
\section{Experiments}\label{sec:empirical_evaluation}
%--------------------------------------------
%
\inlinetitle{Setup and evaluation process}{.} In this section, we provide empirical evidence in support of our analysis and the performance of the proposed \emph{\methodName} algorithm. More specifically, we evaluate in the partial node immunization problem under ICM, as described in \Sec{sec:node_partial_IC}, and we provide comparative experimental results against several strategies, namely:

\noindent \,\,\,\,\,\,\textbf{i)}~\emph{Rand}: random selection of nodes; 

\noindent \,\,\,\,\,\textbf{ii)}~\emph{Degree}: selection of $k$ nodes with highest out-degree; 

\noindent \,\,\,\,\textbf{iii)}~\emph{Weighted-degree}: selection of $k$ nodes with highest sum of outgoing edge weight $w_{ij} = \int_0^{+\infty} \FunMat_{ij}(t)dt$. This strategy can also be seen as the optimization of the first influence lower bound $LB_1$ of \citep{NIPS2016_6347}. 

\noindent \,\,\,\,\textbf{iv)}~\emph{NetShield} algorithm \citep{tong2010vulnerability}.
Given the adjacency matrix of a graph, this outputs the best $k$-nodes to totally immunize so as to decrease the vulnerability of the graph. This is done by assigning to each node a \emph{shield-value} that is high for nodes with high eigenscore and no edges connecting them.
Note that, despite the fact that \emph{NetShield} is tailored for immunization on unweighted graphs, it is not general enough to account for weighted edges and partial immunization as in our experimental setting.

The evaluation is performed on four benchmark datasets (see \Tab{tab:networks}) and the results on each of them are presented in subfigures of \Fig{fig:real_data}: 
(a)~a network of `friends lists' from \dset{Facebook} \citep{snapnets}; 
(b)~the \dset{Gnutella} peer-to-peer file sharing network \citep{snapnets}, 
(c)~the who-trust-whom online review site \dset{Epinions.com}; 
(d)~a real \dset{Airports} network \citep{airport_dataset} with the weighted graph of flights that took place in 1997 connecting US airports; 

Note that for the first three networks, only an unweighted adjacency matrix is provided. The matrix of edge-transmission probabilities $\{p_{ij}\}$ is generated by a \emph{trivalency model}, which is to pick the $p_{ij}$ values uniformly at random from a small set of constants; in our case that is $\{p_{\text{low}}, p_{\text{med}}, p_{\text{high}}\}$ and the specific used values are mentioned explicitly for each dataset. 

In our experiments we evaluate the efficiency of the immunization policies along two measures for both of which lower values are better:

\noindent \,\,\,\,-~\emph{Spectral radius decrease}. We examine the extend of the decrease of the spectral radius of the Hazard matrix $\mathcal{F}$ and, hence, the decrease of the bound of the max-influence as described in \Proposition{prop:bounds}.

\noindent \,\,\,\,-~\emph{Expected influence decrease}. We compare the performance of policies in terms of \Problem{probl:policy_influence}. To this end, for each Hazard matrix $\mathcal{F}$, the influence is computed as the \emph{average number of infected nodes} at the end of over 1,000 runs of the information cascade $\mathcal{CTIC}$ while applying that specific Hazard matrix $\mathcal{F}$. Each time a single initial influencer is selected by the influence maximization algorithm Pruned Monte-Carlo \citep{ohsaka2014fast} by generating 1,000 vertex-weighted directed acyclic graphs (DAGs). 

In our empirical study, we focus on the scenario where the spectral radius of the original network is approximately one, which is the setting in which decreasing the spectral radius has the most impact on the upper bounds in \Proposition{prop:bounds} and \citep{lemonnier2014tight}). We believe that this intermediate regime is the most meaningful and interesting in order to test the different algorithms.

\inlinetitle{Results}{.} 
The results on each of the four real network datasets are shown in subfigures of \Fig{fig:real_data}. For each network, two vertically stacked plots are shown corresponding to the two evaluation measures that we use, for a wide range of budget size $k$ in proportion to the number of nodes of that network. 

Firstly, we should note that the influence and the spectral radius measures correlate generally well across all reported experiments; they present similar decrease \wrt budget increase and hence `agree' in the \emph{order of effectiveness} of each policy when examined individually. 
As expected, all policies perform more comparably when very few or too many resources are available. In the former case, the very `central' nodes are highly prioritized by all methods, while in the latter the significance of node selection diminishes. Even simple approaches perform well in all but \dset{Gnutella} network where we get the most interesting results. NetShape achieves a sharp drop of the spectral radius early (\ie for small budget $k$) in \dset{Gnutella} and \dset{Epinions} networks, which drives a large influence reduction. With regards to influence minimization, the difference to competitors is bigger though in \dset{Gnutella} which is the most sparse and has the smallest strongly connected component (see \Tab{tab:networks}). In \dset{Facebook}, the reduction of the spectral radius is slower and seems less closely related with the influence, in the sense that the upper bound that we optimize is probably less tight to the behavior of the process.

Overall, the performance of the proposed NetShape algorithm is mostly as good or superior to that of the competitors, achieving up to a $50$\% decrease of the influence on the \dset{Gnutella} network compared to its best competitor.

%=================================================================
\section{Conclusion}\label{sec:conclusion}
%=================================================================
%
In this paper, we presented a novel framework for \emph{spectral activity shaping} under the Continuous-Time Information Cascades Model that allows the administrator for local control actions by allocating targeted resources which can alter the spread of the process. The activity shaping is achieved via the optimization of the \emph{spectral radius of the Hazard matrix} which enjoys a simple convex relaxation when used to minimize the influence of the cascade. In addition, we explained by reframing a number of use-cases that the proposed framework is general and includes tasks such as partial quarantine that acts on edges and partial node immunization that acts on nodes. Specifically for the influence minimization, we presented the \emph{NetShape} method which was compared favorably to baseline and a state-of-the-art method on real benchmark network datasets.

Among the interesting and challenging future work directions is the introduction of an `aging' feature to each piece of information to model its loss of relevance and attraction through time, and the theoretical study and experimental validation of the maximization counterpart of \emph{Netshape}. Finally, systematic experiments with random networks and time-varying node infection rates would increase our understanding on the strengths and weaknesses of this framework.

\bibliographystyle{aaai} %ieeetr
\bibliography{bibl}

\end{document}